\newtheorem{theorem}{Theorem}[section]
\newtheorem{lemma}[theorem]{Lemma}
\newtheorem{definition}{Definition}
\newtheorem{corollary}[theorem]{Corollary}
\newtheorem{remark}{Remark}
\newcommand{\setdef}[2]{\{#1 \; : \; #2\}}
\newcommand\aamsout{\bgroup\markoverwith{\textcolor{violet}{\rule[0.5ex]{2pt}{1pt}}}\ULon}
\newcommand{\real}{\mathbb{R}}
\newcommand{\transpose}{\mathsf{T}} 
\newcommand{\mc}{\mathcal}
\DeclareSymbolFont{bbold}{U}{bbold}{m}{n}
\DeclareSymbolFontAlphabet{\mathbbold}{bbold}
\newcommand\oprocendsymbol{\hbox{$\square$}}
\newcommand\oprocend{\relax\ifmmode\else\unskip\hfill\fi\oprocendsymbol}
\newcommand*{\QEDA}{\hfill\ensuremath{\blacksquare}}%
\newcommand\norm[1]{\left\lVert#1\right\rVert}
\DeclareMathOperator{\diag}{diag}
\begin{document}

\title{\bf A Fundamental Performance Limitation for Adversarial
  Classification}

\author{Abed AlRahman Al Makdah, Vaibhav Katewa, and Fabio Pasqualetti
  \thanks{This work was supported in part by ARO award 71603NSYIP. The
    authors are with the Departments of Department of Electrical and
    Computer Engineering and Mechanical Engineering at the University
    of California, Riverside,
    \{\href{mailto:aalm005@ucr.edu}{\texttt{abedam}},\href{mailto:vkatewa@engr.ucr.edu}{\texttt{vkatewa}},
    \href{mailto:fabiopas@engr.ucr.edu}{\texttt{fabiopas\}@engr.ucr.edu.}}}}
\maketitle

\renewcommand{\baselinestretch}{0.995}

\begin{abstract}
  Despite the widespread use of machine learning algorithms to solve
  problems of technological, economic, and social relevance, provable
  guarantees on the performance of these data-driven algorithms are
  critically lacking, especially when the data originates from
  unreliable sources and is transmitted over unprotected and easily
  accessible channels.  In this paper we take an important step to
  bridge this gap and formally show that, in a quest to optimize their
  accuracy, binary classification algorithms -- including those based
  on machine-learning techniques -- inevitably become more sensitive
  to adversarial manipulation of the data. Further, for a given class
  of algorithms with the same complexity (i.e., number of
  classification boundaries), the fundamental tradeoff curve between
  accuracy and sensitivity depends solely on the statistics of the
  data, and cannot be improved by tuning the algorithm.
\end{abstract}



\section{Introduction}\label{sec: introduction}
Artificial intelligence and machine learning algorithms, including
neural networks, are used widely in technological, social and economic
applications, such as computer vision \cite{YL-KK-CF:10,AK-IS-GEH:12},
speech recognition
\cite{GED-DY-LD-AA:12,GH-LD-DY-GED-AM-NJ-AS-VV-PN-TNS-BK:12}, and
malware detection \cite{GED-JWS-LD-DY:13}. While these algorithms
typically achieve high performance under nominal and well-modeled
conditions, they are also very sensitive to small, targeted, and
possibly malicious manipulations of the training and execution data
\cite{CS-WZ-IS-JB-DE-IG-RF:14}. The reasons for this unreliable
behavior are still largely unknown, thus motivating the critical need
for novel theories and tools to deploy robust, reliable, and safe
data-driven algorithms.

In this paper we formally reveal a fundamental and previously unknown
tradeoff between the accuracy of a binary classification algorithm and
its sensitivity to arbitrary manipulation of the data. In particular,
we cast a binary classification problem into an hypothesis testing
framework, parametrize classification algorithms -- including those
based on machine learning techniques -- using their decision
boundaries, and show that the accuracy of the algorithm can be
maximized only at the expenses of its sensitivity. This tradeoff,
which applies to general classification algorithms, depends on the
statistics of the data, and cannot be improved by simply tuning the
algorithm. Our theory explains quantitatively how simple
algorithms can outperform more complex implementations when operating
in adversarial environments.

\noindent \textbf{Related work:} Recent work has shown that
classification based on neural networks is vulnerable to adversarial
perturbations \cite{CS-WZ-IS-JB-DE-IG-RF:14, IJG-JS-CS:14}, and that
these perturbations are universal and affect a large number of
classification algorithms. While heuristic explanations of this phenomenon and heuristic techniques have been proposed, including adversarial learning \cite{IJG-JS-CS:14, DL-CM:05,
  NP-PM-SJ-MF-ZBC-AS:16, SMMD-AAF-PF:16, AK-IG-SB:16,AK-IG-SB-YD:18}, black-box \cite{DL-CM:05}, and gradient-based \cite{IJG-JS-CS:14,
  NP-PM-SJ-MF-ZBC-AS:16}, a fundamental analytical understanding of
the limitations of classification algorithms under adversarial
perturbations is critically lacking. We identify these limitations for
a binary classification problem in a Bayesian setting. While in simple
setting, our analysis formally shows that a fundamental tradeoff
exists between accuracy and sensitivity of any classification
algorithm, independently of the complexity of the algorithm. The
papers \cite{SMMD-AAF-PF:16,AR-JS-PL:18,EW-JZK:17} are also related to
this study, which derive methods to measure robustness of different
classifiers against adversarial perturbations and obtain guarantees
against bounded perturbations, as well as \cite{AK-IG-SB:16}, which
shows how adversarial training improves the classifier's performance
against adversarial perturbations while deteriorating its performance
under nominal conditions. Our approach provides rigorous mathematical
support to the empirical evidence obtained in these works.

\noindent \textbf{Contribution:} This paper features three main
contributions. First, we propose metrics to quantify the accuracy of a
classification algorithm and its sensitivity to arbitrary manipulation
of the data. We prove that, under a set of mild technical assumptions,
the accuracy of a classification algorithm can only be maximized at
the expenses of its sensitivity. Thus, a fundamental tradeoff exists
between the performance of a classification algorithm in nominal and
adversarial settings. While our results formally apply to binary
classification problems, we conjecture that this fundamental tradeoff
in fact applies to more general classification problems. Second, we
show that a tradeoff between accuracy and sensitivity exists for
different classes of classification algorithms, and that simpler
algorithms can sometimes outperform more complex one in adversarial
settings. Third, for a fixed complexity of the classification
algorithm, we numerically show that the accuracy versus robustness
tradeoff depends solely on the statistics of the data, and cannot be
arbitrarily improved by tuning the classification algorithm, including
using sophisticated adversarial learning techniques. Taken together,
our results suggest that performance and robustness of data-driven
algorithms are dictated by the properties of the data, and not by the
sophistication or intelligence of the algorithm.




\section{Problem setup and preliminary notions}\label{sec: setup}
To reveal a fundamental tradeoff between the accuracy of a
classification algorithm and its robustness against malicious data
manipulation, we consider a binary classification problem where the
objective is to decide whether a scalar observation $x \in \real$
belongs to one of the classes $\mc H_0$ and $\mc H_1$. We assume that
the distribution of the observations satisfy
\begin{align}\label{eq:hypotheses}
  \begin{aligned}
    &\mc H_0: x \sim f_0(x ;\theta_0), \text{ and } \mc H_1: x \sim
    f_1(x ;\theta_1),
  \end{aligned} \end{align} where $f_0(x;\theta_0)$ and
$f_1(x;\theta_1)$ are arbitrary, yet known, probability density
functions with parameters $\theta_0 \in \mathbb{R}^{m_0}$ and
$\theta_1 \in \mathbb{R}^{m_1}$, respectively. We assume that the
partial derivatives of $f_k$ with respect to $x$ and $\theta_k$ exist
and are continuous over the domain of the distributions, for $k=0,1$.
Let $p_0$ and $p_1$ denote the prior probabilities of the observations
belonging to the classes $\mc H_0$ and $\mc H_1$, respectively.
Different (machine learning) algorithms can be used to solve the above
binary classification problem. Yet, because of the binary nature of
the problem, any classification algorithm can be represented by a
suitable partition of the real line, and  it can be written~as
\begin{align}\label{eq:general classifier}
  \mathfrak{C}(x;y) =
  \begin{cases}
    \mc H_0, & x \in \mc R_0,\\
    \mc H_{1}, & x \in \mc R_1,\\
  \end{cases}
\end{align}
where\footnote{For simplicity and without affecting generality, we
  assume that $n$ is~even. Further, an alternative configuration of
  the classifier \eqref{eq:general classifier} assigns $\mc{H}_0$ and
  $\mc{H}_1$ to $\mc{R}_1$ and $\mc{R}_0$, respectively. However,
  because accuracy and sensitivity of the two configurations can be
  obtained from each other, we consider only the configuration in
  \eqref{eq:general classifier} without affecting the generality of
  our analysis.} $y = [y_i]$ denotes a set of boundary points, with
\mbox{$y_0 \le \dots \le y_{n+1}$}, $y_0 = -\infty$,
$y_{n+1} = \infty$, and \begin{align*}
  \mc R_0 &= \setdef{z}{ y_i < z < y_{i+1},  \text{ with } i = 0,2,\dots, n} ,\\
  \mc R_1 &= \setdef{z}{ y_i \le z \le y_{i+1}, \text{ with } i = 1,3,\dots, n-1} .
\end{align*}
We refer to \eqref{eq:general classifier} as general classifier. We measure the performance of a classification algorithm through its
\emph{accuracy}, that is, its probability of making a correct
classification.
\begin{definition}{\bf \emph{(Accuracy of a classifier)}}\label{def: accuracy}
  The accuracy of the classification algorithm $\mathfrak{C}(x;y)$
  is
  \begin{align}\label{eq:accuracy}
    \begin{split}
      \mc A(y;\theta) = p_0 &\mathbf{P} \left[ x \in \mc R_0 | \mc
        H_0 \right] + p_{1} \mathbf{P} \left[ x \in \mc R_1 | \mc
        H_{1} \right] ,
    \end{split}
  \end{align}
  where
  $\theta =
  [\theta_0^{\transpose} \; \theta_1^{\transpose}]^{\transpose}$ contains
  the distribution parameters. ~\oprocend
\end{definition}
Using Equation \eqref{eq:accuracy} and the distributions in \eqref{eq:hypotheses}, we obtain
\begin{align} \label{eq:accuracy1}
  \begin{split}
    \mc A(y;\theta)
    &= p_0\Bigg(\sum_{l=1}^n(-1)^{l+1}\int\limits_{-\infty}^{y_l}
    f_0(x;\theta_0)dx + 1\Bigg) \\
    &+ p_{1}\Bigg(\sum_{l=1}^n(-1)^{l}\int \limits_{-\infty}^{y_l}
    f_{1}(x;\theta_{1})dx \Bigg).
  \end{split}
\end{align}
Clearly, the accuracy of a classification algorithm depends on the
position of its boundaries, which can be selected to maximize the
accuracy of the classification algorithm. To this aim, let $L(x)$
denote the Likelihood Ratio defined as
\begin{align*}
  L(x) = \frac{p_1 f_1(x; \theta_1)}{p_0  f_0(x; \theta_0)} .
\end{align*}
The Maximum Likelihood (ML) classifier is
\begin{align}\label{eq: ML classifier}
  \mathfrak{C}_\text{ML} (x; \eta) =
  \begin{cases}
    \mc H_0, & L(x) < \eta,\\
    \mc H_1, & L (x) \ge \eta,\\
  \end{cases} 
\end{align} 
where the threshold $\eta > 0$ is a design
parameter that determines the boundary points and, thus, the accuracy
of the classifier. As a known result in statistical hypothesis testing
\cite{TAS-AAG:06}, the accuracy of the ML classifier with $\eta = 1$
is the largest among all possible classifiers. The value and the
number of boundary points of the ML classifier depend on the
distributions $f_0(x;\theta_0)$ and $f_1(x;\theta_1)$, the threshold
$\eta$, and the prior probabilities through the equation
\begin{align}\label{eq:likelihood ratio equality}
  p_1f_1(x;\theta_1)-\eta p_0f_0(x;\theta_0)=0 .
\end{align}

Another important class of classifiers is the class of linear
classifiers, which are less complex and often achieve a competitive
performance compared to nonlinear classifiers (see
\cite{GXY-CHH-CJL:12} for more details). In our setting, a linear
classifier consists of one decision boundary $y\in\mathbb{R}$, and is
given by \begin{align}\label{eq:linear
    classifier}
  \mathfrak{C}_\text{L}(x;y) =
  \begin{cases}
    \mc H_0, & x < y, \\
    \mc H_1, & x \geq y. \\
  \end{cases}
\end{align}
Following Definition \ref{def:
    accuracy}, the accuracy of $\mathfrak{C}_\text{L}$ is
\begin{align} \label{eq:accuracy_lin_class}
\mc A(y;\theta)&=p_0\!\!\int\limits_{-\infty}^{y}\!\!\!f_0(x;\theta_0)dx - p_1\!\!\int\limits_{-\infty}^{y}\!\!\!f_1(x;\theta_1)dx+ p_1.
\end{align}
The optimal boundary $y_\text{L}^{*}$ that maximizes $\mc A(y;\theta)$ is 
\begin{align}\label{eq:optimizaion_linear_classifier}
  \begin{aligned}
    y_\text{L}^{*}=\:\: &  \underset{y_i}{\arg \max}
    & & \mc A(y_i;\theta) \\
    &\quad  \text{s.t.}
    & & y_i \text{ is a solution of \eqref{eq:likelihood ratio equality} with $\eta=1$.}
  \end{aligned}
\end{align}

While the boundaries are difficult to compute for general
distributions, they can be computed explicitly when the observations
are Gaussian (see below). Let
$\mathcal{N}(x;\mu,\sigma) = \frac{1}{\sqrt{2\pi \sigma^2}} e^{
  -\frac{(x - \mu)^2 }{ 2\sigma^2}}$ be the p.d.f. of a normal random
variable with mean $\mu$ and variance $\sigma$, and
$Q(z)=\int_{-\infty}^z\frac{1}{\sqrt{2\pi}}e^{\frac{-x^2}{2}}dx$ the
c.d.f. of the standard normal distribution.

\begin{remark}{\bf \emph{(ML and linear classifiers for Gaussian
      distributions)}}\label{remark: Gaussian classifier}
  For the Gaussian distributions
  $f_i(x;\theta_i) = \mc{N}(x;\mu_i,\sigma_i)$, $i = 0,1$, the
  boundaries of ML classifier satisfy
  \begin{align}\label{eq:likelihood ratio equality gaussian}
   &a x^2 + b x + c = 0   \qquad  \text{where,}\\ \nonumber
   &a= 
        \frac{1}{2}\Bigg(\frac{1}{\sigma_0^2}-\frac{1}{\sigma_1^2}\Bigg),
        b=\Bigg(\frac{\mu_1}{\sigma_1^2}-\frac{\mu_0}{\sigma_0^2}\Bigg),
       \text{ and }\\ \nonumber
    &c =\log\bigg(\frac{\sigma_0}{\sigma_1}\bigg)+ \log\bigg(
        \frac{p_1}{p_0}
        \bigg)+\frac{\mu_0^2}{2
        \sigma_0^2} -\frac{\mu_1^2}{2\sigma_1^2} -\log(\eta).
  \end{align}
  Equation \eqref{eq:likelihood ratio equality gaussian} has at most
  two real solutions, implying that the ML classifier
  has at most two decision boundaries (see Fig. \ref{Fig:pdf gaussian
    diff cov}). The ML classifier with boundaries corresponding to the
  solutions of \eqref{eq:likelihood ratio equality gaussian} with
  $\eta = 1$ has maximum accuracy \cite{TAS-AAG:06}. The solution of
  \eqref{eq:likelihood ratio equality gaussian} which maximizes the
  accuracy in \eqref{eq:accuracy_lin_class} is the boundary for the
  optimal linear classifier. \oprocend 
\end{remark}

\begin{figure}[!t]
  \centering
  \includegraphics[width=\columnwidth,trim={0cm 0.4cm 0cm
    -0.9cm},clip]{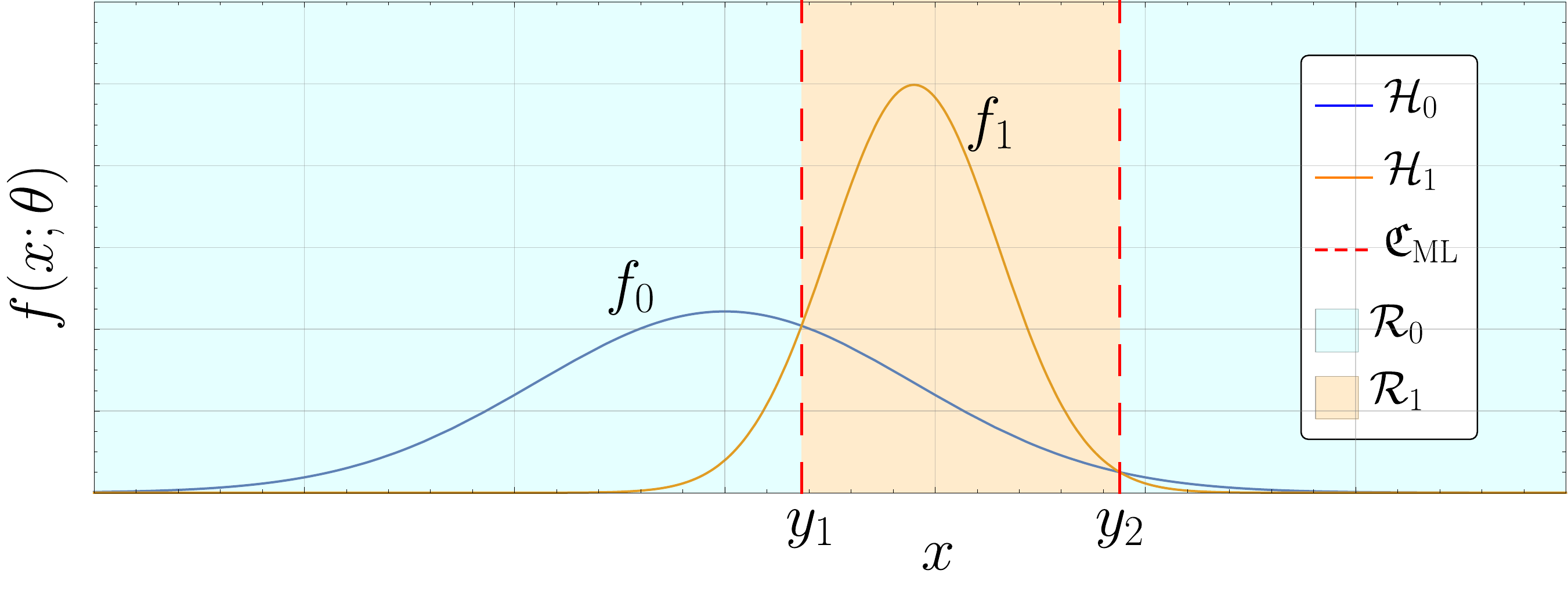}
  \vspace{-.5cm}
  \caption{The distributions of $x$ under Gaussian hypotheses with $\mu_0=0$, $\sigma_0=9$,
    $\mu_1=9$, $\sigma_1=4$, and $p_0=p_1=0.5$. The dashed red lines
    represent the decision boundaries of the
    $\mathfrak{C}_\text{ML}(x;\eta=1)$, which divide the space into
    $\mathcal{R}_0$ (represented by the blue region) and
    $\mathcal{R}_1$ (orange
    region).} 
  \label{Fig:pdf gaussian diff cov}
\end{figure}

To characterize the robustness of a classifier to adversarial
manipulation of the observations, we define the following sensitivity
metric, which capture the degradations of the classification accuracy
following data manipulation. It should be noticed that, by
manipulating the observations, the adversary effectively changes the
parameters of the distributions in \eqref{eq:hypotheses}.
\begin{definition}{\bf \emph{(Sensitivity of a
      classifier)}}\label{def: sensitivity}
  The sensitivity of the classification algorithm\footnote{Definition
    \ref{def: sensitivity} is also valid for the ML and the linear
    classifier.} $\mathfrak{C}(x;y)$ is
  \begin{align}\label{eq: sensitivity}
    \mc S(y; \theta) = \left\| \frac{\partial \mc A(y;\theta)}{ \partial
    \theta} \right\|_\infty ,
  \end{align}
  where $\theta$ contains the parameters of the distributions in
  \eqref{eq:hypotheses}, and $\mc A(y;\theta)$ denotes the accuracy
  of $\mathfrak{C}(x;y)$.  \oprocend
\end{definition}
From Definition \ref{def: sensitivity}, a higher value of sensitivity
implies that the adversary can affect the classifier's performance to
a larger extent, whereas a lower sensitivity implies that the
classifier is more robust to adversarial manipulation. Further, the
$\infty-$norm captures the worst case scenario in terms of the largest
sensitivity with respect to the components of $\theta$.

\begin{remark}{\bf \emph{(Accuracy and sensitivity of the ML
      classifier for Gaussian distributions)}}\label{remark: Gaussian
    classifier} The accuracy and the sensitivity of the ML classifier are obtained
  by substituting the expression of the normal distributions
  $\mc{N}(x;\mu_i,\sigma_i)$ in \eqref{eq:accuracy} and \eqref{eq:
    sensitivity}:
  \begin{align*}
    \begin{split}
      \mc A(y;\theta) &= p_0\Big(Q\Big(\frac{y_1 - \mu_0}{\sigma_0}\Big)
      - Q\Big(\frac{y_2-\mu_0}{\sigma_0}\Big)+1\Big)\\
      &+p_1\Big(-Q\Big(\frac{y_1-\mu_1}{\sigma_1} \Big) + Q
      \Big(\frac{y_2-\mu_1}{\sigma_1}\Big)\Big) \: \text{and,}
    \end{split}\\
    \begin{split} 
      \mc S(y;\theta) &= 
      \left\|
        \begin{bmatrix}
          p_0\Big(f_0\big(y_2;\theta_0 \big)-f_0(y_1;\theta_0)\Big)\\
          p_0\Big(\frac{\mu_0-y_1}{\sigma_0}f_0(y_1;\theta_0) -
          \frac{\mu_0-y_2}{\sigma_0}f_0(y_2;\theta_0)\Big)\\
          p_1\Big(f_1(y_1;\theta_1)-f_1(y_2;\theta_1)\Big)\\
          p_1\Big(\frac{\mu_1-y_2}{\sigma_1}f_1(y_2;\theta_1) -
          \frac{\mu_1-y_1}{\sigma_1}f_1(y_1;\theta_1)\Big)
        \end{bmatrix}
      \right\|_{\infty},
    \end{split}
  \end{align*}  
  where $\theta_i = [\mu_i \; \sigma_i ]^{\mathsf{T}}$ and $i=0,1$.  \oprocend
\end{remark}

A classification algorithm should be designed to have high accuracy
and low sensitivity, so as to exhibit robust satisfactory performance
in the face of adversarial manipulation. Unfortunately, in this paper
we show that accuracy and sensitivity are directly related, so that
optimizing the accuracy of a classifier inevitably also increases its
sensitivity.

\section{A fundamental tradeoff between accuracy and sensitivity of
  classification algorithms}\label{sec: tradeoff} 
In this section, we characterize the tradeoff between accuracy and
sensitivity of a classification algorithm for a given binary
classification problem as described in \eqref{eq:hypotheses}. In
particular, we prove that under some mild conditions, there exist a
classifier that is less accurate than $\mathfrak{C}_\text{ML}(x;1)$,
yet more robust to adversarial manipulation of the observations. This
shows that there exist a tradeoff between accuracy and sensitivity at
the the maximum accuracy configuration.

Let $y^* = [y_1^{*} \; y_2^{*} \; \cdots \; y_n^{*}]^{\transpose}$ be
the vector of the boundaries of $\mathfrak{C}_\text{ML}(x;1)$, which
maximizes $\mc A(y;\theta)$. Let $\theta^{(i)}$ be the $i^{\text{th}}$
component of $\theta$. We make the following assumptions:
\smallskip
\begin{itemize}
  \itemsep0.6em
\item[A1:] The vector
  $ \frac{\partial \mc A(y;\theta)}{\partial \theta} \Bigr|_{y^{*}}$
  has a unique largest absolute element, located at index $j$.
  
    
\item[A2:]There exist at least one boundary $y_i^*$ such that
\begin{align*}
\Bigg(p_0\frac{\partial }{\partial y_i} f_0(y_i;\theta_0) \Bigg|_{y_i^*}\!\!- p_{1}\frac{\partial}{\partial y_i} f_{1}(y_i;\theta_{1}) \Bigg|_{y_i^*}\Bigg) \frac{\partial y_i^*}{\partial \theta^{(j)}} \neq 0.
\end{align*}
\end{itemize}
Assumptions A1 is specific to our definition of sensitivity in \eqref{eq: sensitivity}, and is not required if $2-$norm is used (see Remark \ref{remark:
    assumption A1}). Further, A2 is mild and typically satisfied in
  most problems.

\begin{theorem}{\bf \emph{(Accuracy vs. sensitivity
      tradeoff for classifier \eqref{eq:general classifier})}}\label{thm:tradeoff}
  Let $y^*$ contain the boundaries of the classifier
  $\mathfrak{C}_\text{ML}(x;1)$. Then, under Assumptions A1 and A2, it holds
  \begin{align}
    \left.\frac{\partial{\mc S (y;\theta)}}{\partial
    y}\right|_{y^*}   \neq 0 .
  \end{align}
\end{theorem}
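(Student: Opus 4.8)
The plan is to reduce the vector statement $\partial \mc S/\partial y|_{y^*}\neq 0$ to the nonvanishing of a single mixed second-order partial derivative of the accuracy, and then to identify that derivative with the quantity in Assumption A2 through the optimality equations that define $y^*$. Throughout I treat $\mc A(y;\theta)$ as a function of the independent arguments $y$ and $\theta$, as in Definition~\ref{def: sensitivity}.

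First I would use A1 to localize the $\infty$-norm. Since $\partial\mc A/\partial\theta|_{y^*}$ has a \emph{unique} largest absolute element at index $j$, and $\partial\mc A/\partial\theta$ is continuous in $y$ (by the regularity of $f_0,f_1$ assumed in the setup), this component stays strictly dominant on a neighborhood of $y^*$; a strict maximum of absolute values also forces it to be nonzero. Hence locally $\mc S(y;\theta)=\bigl|\partial\mc A(y;\theta)/\partial\theta^{(j)}\bigr|$, and because the argument does not vanish, $\mc S$ is differentiable there with
\[
\frac{\partial \mc S}{\partial y_i}\bigg|_{y^*} = \operatorname{sign}\!\left(\frac{\partial \mc A}{\partial\theta^{(j)}}\right)\frac{\partial^2 \mc A}{\partial y_i\,\partial\theta^{(j)}}\bigg|_{y^*}.
\]
So it suffices to exhibit one index $i$ with $\partial^2\mc A/(\partial y_i\,\partial\theta^{(j)})|_{y^*}\neq 0$, and showing one nonzero component of the gradient already proves the vector is nonzero.

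Next I would differentiate the accuracy expression \eqref{eq:accuracy1} in $y$. Only the $l=i$ integral contributes, giving
\[
\frac{\partial \mc A}{\partial y_i} = (-1)^{i+1}\big(p_0 f_0(y_i;\theta_0) - p_1 f_1(y_i;\theta_1)\big),
\]
which depends on $y$ only through $y_i$. The crucial structural consequence is that the $y$-Hessian of $\mc A$ is \emph{diagonal}: $\partial^2\mc A/(\partial y_i\,\partial y_k)=0$ for $k\neq i$. The key step is then to rewrite the mixed partial. Because $y^*$ maximizes $\mc A(\cdot;\theta)$ for every $\theta$, the stationarity condition $\partial\mc A/\partial y_i\,(y^*(\theta);\theta)=0$ holds identically in $\theta$; differentiating it with respect to $\theta^{(j)}$ and using the diagonal Hessian to annihilate all cross terms yields the implicit-function relation
\begin{align*}
\frac{\partial^2 \mc A}{\partial y_i\,\partial\theta^{(j)}}\bigg|_{y^*}
&= -\frac{\partial^2 \mc A}{\partial y_i^2}\bigg|_{y^*}\frac{\partial y_i^*}{\partial\theta^{(j)}}\\
&= (-1)^{i}\left(p_0 \frac{\partial f_0}{\partial y_i}\bigg|_{y_i^*} - p_1 \frac{\partial f_1}{\partial y_i}\bigg|_{y_i^*}\right)\frac{\partial y_i^*}{\partial\theta^{(j)}},
\end{align*}
where the last equality substitutes $\partial^2\mc A/\partial y_i^2=(-1)^{i+1}(p_0\,\partial f_0/\partial y_i-p_1\,\partial f_1/\partial y_i)$. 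Assumption A2 states precisely that this right-hand side is nonzero for at least one boundary $y_i^*$ (the sign $(-1)^i$ being irrelevant), which gives the desired index and closes the argument.

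I expect the main obstacle to be this middle link: the direct $\theta$-differentiation of $\partial\mc A/\partial y_i$ produces $\partial f_k/\partial\theta^{(j)}$ terms, whereas A2 is phrased with $\partial f_k/\partial y_i$ terms, so the proof genuinely relies on the fact that $y^*$ is defined \emph{implicitly} by the optimality equations and on the diagonality of the $y$-Hessian to reroute the cross-derivative through $\partial y_i^*/\partial\theta^{(j)}$. I would also check the two differentiability prerequisites carefully: that the dominant component $\partial\mc A/\partial\theta^{(j)}$ is nonzero at $y^*$ (so $\mc S$ is smooth there, as noted above), and that the boundary selected by A2 is a nondegenerate root, since A2's first factor $p_0\,\partial f_0/\partial y_i-p_1\,\partial f_1/\partial y_i$ being nonzero is exactly the condition under which the implicit function theorem applies and $\partial y_i^*/\partial\theta^{(j)}$ is well defined.
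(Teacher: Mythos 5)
Your proposal is correct and follows essentially the same route as the paper's proof: use A1 to localize the $\infty$-norm to the $j$-th component so that $\mc S=\bigl|\partial\mc A/\partial\theta^{(j)}\bigr|$ near $y^*$, differentiate the stationarity condition $\partial\mc A/\partial y\,(y^*(\theta);\theta)=0$ with respect to $\theta^{(j)}$, exploit the diagonal $y$-Hessian of $\mc A$, and invoke A2 to get a nonzero component. Your treatment is somewhat more careful than the paper's on the two differentiability points (persistence and nonvanishing of the dominant component, and well-posedness of $\partial y_i^*/\partial\theta^{(j)}$), but the argument is the same.
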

\begin{proof}
%
Assumption A1 guarantees that sensitivity $\mc{S}(y;\theta)$ is differentiable with respect to $y$ at $y^{*}$. Let $g\big(y;\theta\big)\triangleq \frac{\partial \mc A(y;\theta)}{\partial y}$. Since $y^*$ maximizes $\mc A(y;\theta)$, we have $g\big(y^*;\theta\big)=0$. Differentiating $g\big(y^*;\theta\big)$ with respect to $\theta^{(j)}$, and noting that $y^{*}$ depends on $\theta$, we get:
\begin{align}\label{pf: chain rule}
&\frac{\mathsf{d} g\big(y^*;\theta\big)}{\mathsf{d} \theta^{(j)}}=\frac{\partial g\big(y;\theta\big)}{\partial \theta^{(j)}}\Bigg|_{y^{*}}+\frac{\partial g\big(y;\theta\big)}{\partial y}\Bigg|_{y^{*}} \frac{\partial y^*}{\partial \theta^{(j)}} = 0, \nonumber \\
& \Rightarrow \frac{\partial}{\partial y}\frac{\partial \mc A(y;\theta)}{\theta^{(j)}}\Bigg|_{y^*}=-\frac{\partial^2 \mc A(y;\theta)}{\partial y^2}\Bigg|_{y^*}\frac{\partial y^*}{\partial \theta^{(j)}},
\end{align} 
where the last equation follows by substituting $g\big(y;\theta\big) =  \frac{\partial \mc A(y;\theta)}{\partial y}$ and switching the order of partial differentiation. Using \eqref{eq: sensitivity}, it can be easily observed that the left side of \eqref{pf: chain rule} equals $\pm \frac{\partial \mc S(y;\theta)}{\partial y}\Big|_{y^*}$. Further, differentiating \eqref{eq:accuracy1} twice, we get $\left. \frac{\partial^2}{\partial y^2} \mc A(y;\theta)=\diag(w_1(y_1),\cdots,w_n(y_n))\right.$, where
\begin{align*}
w_i(y_i) = p_0(-1)^{i+1}\frac{\partial}{\partial y_i}f_0(y_i;\theta_0)+ p_{1}(-1)^i\frac{\partial}{\partial y_i}f_{1}(y_i;\theta_{1}).
\end{align*}
Assumption A2 guarantees that there exist a boundary $y_i^{*}$ such that $w_i(y_i^{*})\frac{\partial y_i^*}{\partial \theta^{(j)}} \neq 0$. The reult follows from \eqref{pf: chain rule}.
\end{proof}

Theorem \ref{thm:tradeoff} implies that the sensitivity of the
classifier $\mathfrak{C}(x;y)$ can be decreased by modifying the
boundaries~$y^*$. Yet, because $\mathfrak{C}(x;y^{*})$ exhibits the
largest classification accuracy among all classifiers, the reduction
of sensitivity inevitably decreases the accuracy of classification. In
other words, for any classification problem \eqref{eq:hypotheses}
satisfying Assumption A1 and A2 and for any classification algorithm
\eqref{eq:general classifier}, there exists an arbitrarily small
$\delta$ such that\footnote{The inequality for accuracy is strict
  for most distributions.}
\begin{align*}
  \mc S (y^* + \delta; \theta) < \mc S (y^*; \theta)  \text{ and
  }  \mc A (y^* + \delta; \theta) \le \mc A(y^*; \theta) .
\end{align*}
This result also implies that the robustness of a classification
algorithm to adversarial manipulation of the data can be increased
only at the expense of its accuracy of classification. Thus, a
fundamental tradeoff exists between the accuracy of a classifier and
its robustness to adversarial manipulation. 
\begin{corollary}{\bf \emph{(Accuracy and sensitivity of the linear
      classifier \eqref{eq:linear classifier})}}\label{lemma:tradeoff
    linear classifier}
  Let $y_\text{L}^*$ be the boundary given in
  \eqref{eq:optimizaion_linear_classifier} that maximizes the accuracy
  (in \eqref{eq:accuracy_lin_class}) of the linear classifier
  $\mathfrak{C}_\text{L}(x;y)$. Then, under Assumptions A1 and A2, it holds
  \begin{align}
    \left.\frac{\partial{\mc S (y;\theta)}}{\partial
    y}\right|_{y_\text{L}^*}   \neq 0 .
  \end{align}
\end{corollary}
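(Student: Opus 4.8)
The plan is to run the argument of Theorem \ref{thm:tradeoff} verbatim, specialized to the single-boundary setting of the linear classifier, where the Hessian $\partial^2 \mc A/\partial y^2$ collapses to a scalar. First I would observe that, by \eqref{eq:accuracy_lin_class}, the accuracy of $\mathfrak{C}_\text{L}$ is a scalar function of the single boundary $y$, so that
\[
g(y;\theta) \triangleq \frac{\partial \mc A(y;\theta)}{\partial y} = p_0 f_0(y;\theta_0) - p_1 f_1(y;\theta_1).
\]
By \eqref{eq:optimizaion_linear_classifier}, the optimal boundary $y_\text{L}^*$ is a solution of \eqref{eq:likelihood ratio equality} with $\eta = 1$, i.e. $p_0 f_0(y_\text{L}^*;\theta_0) = p_1 f_1(y_\text{L}^*;\theta_1)$, hence $g(y_\text{L}^*;\theta) = 0$. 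This scalar first-order condition plays the role of $g(y^*;\theta)=0$ in the proof of Theorem \ref{thm:tradeoff}.

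Next I would differentiate the identity $g(y_\text{L}^*;\theta)=0$ with respect to the distinguished parameter $\theta^{(j)}$ (the index of the dominant component of $\partial \mc A/\partial\theta$ furnished by Assumption A1), treating $y_\text{L}^*$ as a function of $\theta$. Exactly as in \eqref{pf: chain rule}, the chain rule gives
\[
\frac{\partial}{\partial y}\frac{\partial \mc A(y;\theta)}{\partial \theta^{(j)}}\Bigg|_{y_\text{L}^*} = -\frac{\partial^2 \mc A(y;\theta)}{\partial y^2}\Bigg|_{y_\text{L}^*}\frac{\partial y_\text{L}^*}{\partial \theta^{(j)}},
\]
where the second-order term is now the scalar $w(y_\text{L}^*) = p_0 \frac{\partial}{\partial y}f_0(y;\theta_0)|_{y_\text{L}^*} - p_1 \frac{\partial}{\partial y}f_1(y;\theta_1)|_{y_\text{L}^*}$. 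By Assumption A1 the $\infty$-norm defining $\mc S$ is, in a neighborhood of $y_\text{L}^*$, realized by the single component $\partial \mc A/\partial\theta^{(j)}$, so that $\mc S(y;\theta)=\big|\partial \mc A/\partial\theta^{(j)}\big|$ there and the left-hand side above equals $\pm\,\partial \mc S/\partial y|_{y_\text{L}^*}$, exactly as in Theorem \ref{thm:tradeoff}. Assumption A2, instantiated at the unique boundary $y_\text{L}^*$, states precisely that $w(y_\text{L}^*)\,\partial y_\text{L}^*/\partial\theta^{(j)}\neq 0$, so the right-hand side is nonzero and the result follows.

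The only points needing care are the two facts inherited from Theorem \ref{thm:tradeoff}: that $y_\text{L}^*$ is genuinely a stationary point of $\mc A$ (justified above from the constraint in \eqref{eq:optimizaion_linear_classifier}), and that Assumption A1 renders $\mc S$ differentiable in $y$ at $y_\text{L}^*$ by isolating a single active component of the $\infty$-norm. The main subtlety, rather than a genuine obstacle, is confirming that this differentiability argument survives the restriction to one boundary: here $\theta$ still carries four components (e.g. $\mu_0,\sigma_0,\mu_1,\sigma_1$ in the Gaussian case) while only one boundary couples them, but this is immaterial because A1 constrains the gradient $\partial \mc A/\partial\theta$ and not the number of boundaries. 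With these in place, Corollary \ref{lemma:tradeoff linear classifier} is an immediate scalar specialization of Theorem \ref{thm:tradeoff}.
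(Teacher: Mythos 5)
Your proof is correct, but it takes a longer route than the paper, which disposes of the corollary in one line: since the constraint in \eqref{eq:optimizaion_linear_classifier} forces $y_\text{L}^*$ to solve \eqref{eq:likelihood ratio equality} with $\eta=1$, $y_\text{L}^*$ coincides with one of the components of $y^*$, and the paper simply declares that the claim ``follows from Theorem \ref{thm:tradeoff}.'' Your version instead re-runs the entire argument of the theorem in the scalar case --- the first-order condition $g(y_\text{L}^*;\theta)=0$ obtained from the likelihood-ratio constraint, the implicit differentiation with respect to $\theta^{(j)}$, the identification of the second derivative with the scalar $w(y_\text{L}^*)$, and the use of A1 to isolate the active component of the $\infty$-norm. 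The mathematics is identical to the theorem's proof specialized to $n=1$, so there is no new idea here; but your self-contained derivation is arguably tighter than the paper's citation, for two reasons. First, Theorem \ref{thm:tradeoff} only asserts that the gradient vector $\partial\mc S/\partial y|_{y^*}$ is nonzero, i.e.\ that \emph{some} partial derivative is nonzero, not necessarily the one with respect to the particular boundary equal to $y_\text{L}^*$; second, the linear classifier's accuracy and sensitivity are different functions (of a single scalar boundary) from those of the $n$-boundary general classifier, so the reduction is not literally an instance of the theorem's conclusion. Your direct argument sidesteps both issues, at the cost of implicitly re-reading A1 and A2 as holding at $y_\text{L}^*$ for the linear classifier's accuracy function rather than at $y^*$ for the general one --- a reinterpretation the paper also makes silently, so this is not a gap relative to the paper's own standard of rigor.
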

\begin{proof}
Since $y_\text{L}^{*}$ corresponds to one of the boundaries contained in $y^{*}$, the proof follows from Theorem~\ref{thm:tradeoff}.
\end{proof}

Next, we show that this tradeoff also exists for the Maximum
Likelihood classifier. This fact does not follow trivially from
Theorem \ref{thm:tradeoff}, because the general classifier in Theorem
has independent boundaries, while the boundaries of the Maximum
Likelihood are dependent from one another via \eqref{eq:likelihood
  ratio equality}. We make the following mild technical assumption.

\begin{itemize}
  \itemsep0.6em 
\item[A3:] The vectors
  $\left.\frac{\partial y(\eta,\theta)}{\partial
      \eta}\right|_{\eta=1}$ and
  $\left.\frac{\partial{\mc S (y;\theta)}}{\partial y}\right|_{y^*}$
  are not orthogonal, where $y(\eta,\theta)$ contains the boundaries
  of $\mathfrak{C}_\text{ML}(x;\eta)$. 
\end{itemize}

\begin{lemma}{\bf \emph{(Accuracy and sensitivity of the ML classifier \eqref{eq: ML classifier})}}\label{lemma:tradeoff ML}
  Let $y(\eta,\theta)$ contain the boundaries of the classifier
  $\mathfrak{C}_\text{ML}(x;\eta)$. Then, under Assumptions A1, A2 and
  A3, it holds
  \begin{align*}
    \left.\frac{\partial{\mc S \left(y(\eta,\theta);\theta\right)}}{\partial
    \eta}\right|_{\eta=1}   \neq 0 .
  \end{align*}
\end{lemma}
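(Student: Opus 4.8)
The plan is to reduce the $\eta$-derivative of sensitivity to the $y$-derivative that has already been shown to be nonzero in Theorem~\ref{thm:tradeoff}, using the chain rule together with the dependence of the boundaries on $\eta$. Concretely, since $y(\eta,\theta)$ solves \eqref{eq:likelihood ratio equality} for each $\eta$, the sensitivity evaluated along the ML family is the composition $\eta \mapsto \mc S\big(y(\eta,\theta);\theta\big)$. Differentiating with respect to $\eta$ and evaluating at $\eta=1$ (where $y(1,\theta)=y^*$) gives
\begin{align}\label{pf:ML chain}
\left.\frac{\partial \mc S\big(y(\eta,\theta);\theta\big)}{\partial \eta}\right|_{\eta=1}
= \left.\frac{\partial \mc S(y;\theta)}{\partial y}\right|_{y^*}^{\!\!\transpose}
\left.\frac{\partial y(\eta,\theta)}{\partial \eta}\right|_{\eta=1}.
\end{align}
The right-hand side is an inner product between two vectors, both evaluated at the maximum-accuracy configuration.

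The key observation is that the second factor, $\left.\frac{\partial y(\eta,\theta)}{\partial \eta}\right|_{\eta=1}$, is a well-defined nonzero vector: differentiating the defining relation \eqref{eq:likelihood ratio equality}, namely $p_1 f_1(y_i;\theta_1)-\eta\, p_0 f_0(y_i;\theta_0)=0$, implicitly with respect to $\eta$ yields, for each boundary $y_i$,
\begin{align}\label{pf:implicit eta}
\frac{\partial y_i}{\partial \eta}
= \frac{p_0 f_0(y_i;\theta_0)}{\,p_1\frac{\partial}{\partial y_i}f_1(y_i;\theta_1)-\eta\, p_0\frac{\partial}{\partial y_i}f_0(y_i;\theta_0)\,},
\end{align}
which is guaranteed to exist and be finite by the smoothness assumptions on $f_0,f_1$ and the fact that the boundaries are simple roots. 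I would first verify that this vector is not identically zero, so that orthogonality is the only obstruction to \eqref{pf:ML chain} vanishing.

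With \eqref{pf:ML chain} in hand, the conclusion is immediate: Theorem~\ref{thm:tradeoff} (via Assumptions A1 and A2) guarantees that $\left.\frac{\partial \mc S(y;\theta)}{\partial y}\right|_{y^*}\neq 0$, and Assumption~A3 states precisely that this vector is not orthogonal to $\left.\frac{\partial y(\eta,\theta)}{\partial \eta}\right|_{\eta=1}$. Hence the inner product on the right-hand side of \eqref{pf:ML chain} is nonzero, which is the desired claim. Assumption~A3 is the device that bridges the gap the authors flag in the paragraph preceding the lemma: because the ML boundaries move in a single coupled direction as $\eta$ varies (rather than independently, as in the general classifier), the $y$-gradient of sensitivity could in principle be orthogonal to the admissible direction of motion, and A3 rules out exactly that degenerate case.

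The main obstacle is the justification of \eqref{pf:ML chain} itself, specifically that the composed map $\eta\mapsto \mc S\big(y(\eta,\theta);\theta\big)$ is differentiable at $\eta=1$. This requires (i) that $\eta\mapsto y(\eta,\theta)$ be differentiable there, which follows from the implicit function theorem applied to \eqref{eq:likelihood ratio equality} provided the Jacobian denominator in \eqref{pf:implicit eta} is nonzero at $y^*$ (a simple-root condition already implicit in the setup), and (ii) that $\mc S$ be differentiable in $y$ at $y^*$, which is exactly what Assumption~A1 secures, as noted at the start of the proof of Theorem~\ref{thm:tradeoff}. I would state these two regularity points explicitly and then present \eqref{pf:ML chain} as a direct chain-rule computation, so that the substantive content reduces cleanly to the two already-established facts plus A3.
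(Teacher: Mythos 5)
Your proposal is correct and follows essentially the same route as the paper: the paper's proof consists precisely of the chain-rule identity $\left.\frac{\partial \mc S(y(\eta,\theta);\theta)}{\partial \eta}\right|_{\eta=1}=\left.\frac{\partial \mc S(y;\theta)}{\partial y^{\transpose}}\right|_{y^*}\left.\frac{\partial y(\eta,\theta)}{\partial \eta}\right|_{\eta=1}$ followed by an appeal to Theorem~\ref{thm:tradeoff} and Assumption~A3. Your added implicit-function-theorem computation of $\partial y_i/\partial\eta$ and the explicit regularity discussion are extra rigor the paper leaves implicit, not a different argument.
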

\begin{proof}
  Let $y^*$ contain the boundaries of the classifier
  $\mathfrak{C}_\text{ML}(x;\eta=1)$. The derivative of
  $\mc S \big(y(\eta,\theta);\theta\big)$ with respect to $\eta$ can
  be written as:
  \begin{align*}
  \left.\frac{\partial{\mc S \big(y(\eta,\theta);\theta\big)}}{\partial \eta}\right|_{\eta=1}
&=\left.\frac{\partial{\mc S \big(y;\theta\big)}}{\partial y^{\mathsf{T}}}\right|_{y^*}
\left.\frac{\partial y(\eta,\theta)}{\partial \eta}\right|_{\eta=1}.
\end{align*}
We conclude following Theorem \ref{thm:tradeoff} and Assumption~A3.
\end{proof}
%

%



\begin{figure*}[!h]
  \centering
  \includegraphics[width=\textwidth,trim={0.1cm 1.5cm 0.2cm
    0.74cm},clip]{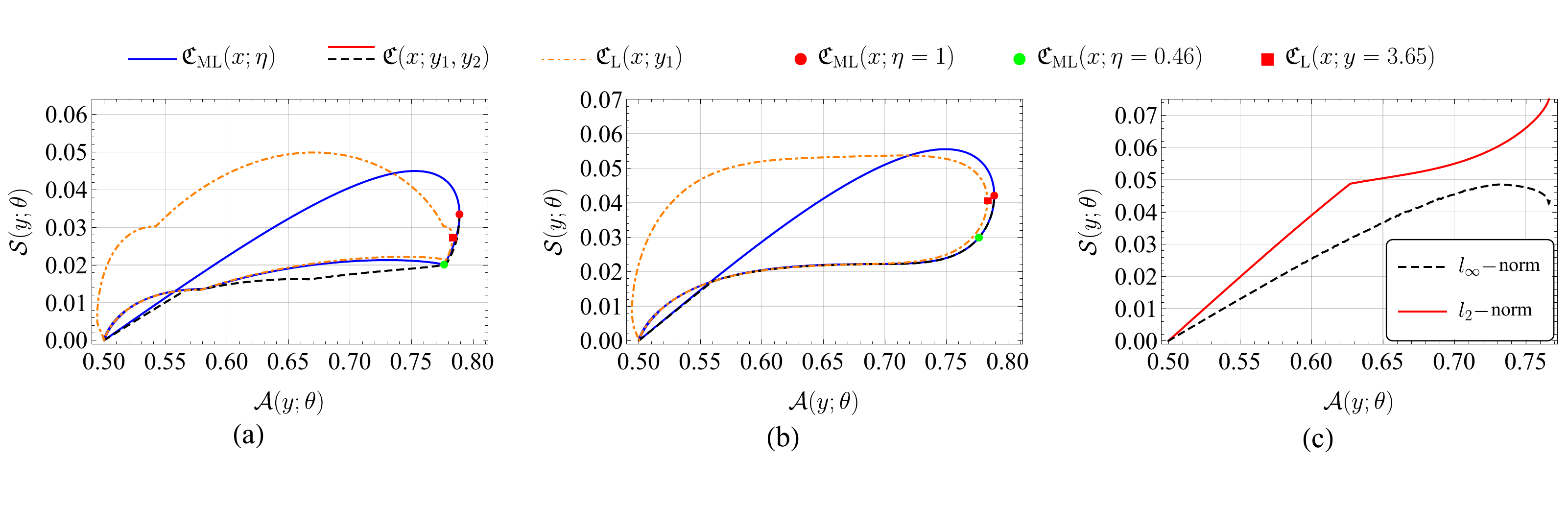}
  \vspace{-.5cm}
  \caption{Sensitivity-accuracy tradeoff curves for a general
    classifier with $2$ boundaries (black dashed line), the ML classifier
    (blue line), and a linear classifier (orange dash-dotted line) corresponding to the Gaussian hypothesis
    testing problem. The parameters of the two distributions for Fig.
    \ref{Fig:tradeoff curves}(a)-(b) are $\mu_0=0$, $\sigma_0=9$,
    $\mu_1=9$, and $\sigma_1=4$, and for Fig. \ref{Fig:tradeoff
      curves}(c) are $\mu_0=0$, $\sigma_0=4$, $\mu_1=5$, and
    $\sigma_1=3$. The red dot represents
    $\mathfrak{C}_{\text{ML}}(x;1)$ (maximum accuracy point) and the
    green dot represents $\mathfrak{C}_{\text{ML}}(x;0.46)$. The
    red square represents $\mathfrak{C}_{\text{L}}(x;y=3.65)$, which
    is the linear classifier with maximum accuracy. The sensitivity in
    \ref{Fig:tradeoff curves}(a) and \ref{Fig:tradeoff curves}(c) is
    computed using Definition \ref{def: sensitivity}, while the
    sensitivity in Fig. \ref{Fig:tradeoff curves}(b) is computed using
    \eqref{eq:sensitivity L2}.}
  \label{Fig:tradeoff curves}
\end{figure*}

In what follows we numerically show that a tradeoff between accuracy
and sensitivity also exists when the classification boundaries are not
selected to maximize the accuracy of the classifier. To this aim,
first we compute the accuracy and sensitivity of the ML classifier
$\mathfrak{C}_\text{ML}(x;\eta)$, for different values of $\eta$.
Notice that, by varying $0\le \eta < \infty$,
Equation~\eqref{eq:likelihood ratio equality} returns different
classification boundaries and, thus, different classification
algorithms. Similarly, we compute the accuracy and sensitivity of
linear classifier $\mathfrak{C}_\text{L}(x;y)$ by varying the single
boundary $y$. Second, we numerically solve
\begin{align}\label{eq:optimizaion}
  \begin{aligned}
    & \underset{y}{\text{min}}
    & & \mc S(y;\theta) \\
    & \text{s.t.}
    & & \mc A(y;\theta) = \zeta,
  \end{aligned}
\end{align}
for different values of $\zeta$ ranging from $0.5$ to
$\mc A(y^*; \theta)$. Notice that the minimization problem
\eqref{eq:optimizaion} returns the classifier with lowest sensitivity
and accuracy equal to $\zeta$, and that the boundaries solving the
minimization problem \eqref{eq:optimizaion} may not satisfy
\eqref{eq:likelihood ratio equality}. Further, for a given number of
classification boundaries, the minimization problem
\eqref{eq:optimizaion} returns a fundamental tradeoff curve relating
accuracy and sensitivity over the range of $\zeta$, which is
independent of the choice of classification algorithm. Finally, the
minimization problem \eqref{eq:optimizaion} is not convex, because of
its nonlinear equality constraint.

Fig. \ref{Fig:tradeoff curves}(a) shows the accuracy-sensitivity
tradeoff for the Gaussian hypothesis testing problem discussed in
Remark \ref{remark: Gaussian classifier}. In this case, since the ML
classifier has $2$ boundaries, we also consider general classifiers
with $2$ boundaries. We observe that the general classifier exhibits
the tradeoff at the maximum accuracy point (identified by the red dot)
in accordance with Theorem \ref{thm:tradeoff}. Several comments are in
order. First, the ML and linear classifiers also exhibit tradeoff at
their respective maximum accuracy points in accordance with Lemmas
\ref{lemma:tradeoff ML} and \ref{lemma:tradeoff linear classifier}.
Second, the tradeoff for the ML classifier is not strict and there
exist points where reducing accuracy increases sensitivity (green dot
in the figure). On the other hand, the tradeoff for the general
classifier is strict. This is because the decision boundaries of the
general classifier can be varied independently, whereas the boundaries
of the ML classifier are related to each other since they are the
solutions of \eqref{eq:likelihood ratio equality}. Thus, the general
classifier provides more flexibility in choosing the boundaries.
Similarly, the tradeoff for the linear classifier is not strict.
Third, the tradeoff curve for the general classifier is below the
tradeoff curves for the ML and linear classifier, again, due to the
aforementioned reason.\footnote{ML and linear
  classifiers are particular instances of the general~classifier.}
Fourth, the maximum accuracy of the linear classifier (corresponding
to red square) is smaller than that of ML classifier (corresponding to
the red dot), but its sensitivity at the maximum accuracy
configuration is also smaller than that of the ML classifier. This
explains the observed phenomena that in some cases, linear models are
more robust to adversarial attacks than nonlinear models (for example,
neural networks) \cite{AG-YV-XK:18}. Finally, the curves are not
smooth because of the $\infty$-norm in the definition \eqref{def:
  sensitivity}. 

We conclude with two remarks on using the $2$-norm to define
sensitivity and on the necessity of Assumption A1.

\begin{remark}{\bf \emph{ (Classification sensitivity using the
      $2-$norm)}} In Definition \ref{def: sensitivity}, the
  $\infty$-norm captures the largest change in accuracy with respect to a change in
  a single component of parameters vector $\theta$. Instead, using the
  $2$-norm to define the sensitivity of a classification algorithm
  leads to
  \begin{align}\label{eq:sensitivity L2}
    \mc S &(y;\theta) = \norm{\frac{\partial
            \mc A(y;\theta)}{\partial \theta}}_{2} ,
  \end{align}
  which captures the change in accuracy with respect to changes in all
  the components of $\theta$. Fig. \ref{Fig:tradeoff curves}(b) shows
  the sensitivity versus accuracy tradeoff when sensitivity is defined
  using \eqref{eq:sensitivity L2} instead of \eqref{eq: sensitivity}.
  Here, a strict tradeoff exists for the general, ML and linear
  classifiers. Further, the tradeoff curves are smooth since the
  $2$-norm is a smooth function. \oprocend 
\end{remark}

\begin{remark}{\bf \emph{(Necessity of Assumption A1)}}\label{remark:
    assumption A1}
  Theorem \ref{thm:tradeoff} may not hold when Assumption A1 is not
  satisfied, and we illustrate this fact in Fig. \ref{Fig:tradeoff
    curves}(c). In this case, the vector
  $\frac{\partial \mc A(y^{*};\theta)}{\partial \theta} = [0.043 \;
  0.024 \; -0.043 \; 0.040]^{\mathsf{T}}$ has two elements with
  maximum absolute value, violating Assumption A1. As a result, a
  tradeoff at the maximum accuracy point (denoted by red dot) does not
  exists as shown in the figure. Yet, a tradeoff still exist for
  sensitivity defined as in \eqref{eq:sensitivity L2}, indicating that
  A1 might be required only for definition \eqref{eq:
    sensitivity}. \oprocend
\end{remark}

\section{An illustrative example}\label{sec: example}
In this section we illustrate numerically the implications of Theorem
\ref{thm:tradeoff}. In particular, we consider two classification
algorithms with different accuracy and sensitivity, and show how their
performance degrades differently when the observations are corrupted
by an adversary. This implies that, when robustness to adversarial
manipulation of the observations is a concern, classification
algorithms should be designed to simultaneously optimize accuracy and
sensitivity, and should not operate at their point of maximum
accuracy.

Consider the classification problem \eqref{eq:hypotheses}, and let
\begin{align}\label{eq: nominal distributions}
  f_0 (x, \theta_0) = \mathcal{N}(x;\mu_0,\sigma_0), 
  f_1 (x, \theta_1) = \mathcal{N}(x;\mu_1,\sigma_1) .
\end{align}
Let $\mathfrak{C}^{1}=\mathfrak{C}_\text{ML}(x;1)$ and
$\mathfrak{C}^{2}=\mathfrak{C}_\text{ML}(x;0.4603)$ be the
classification algorithms identified by the red and green points in
Fig. \ref{Fig:tradeoff curves}(a), respectively. Notice that, when the
observations are not manipulated and follow the distributions
\eqref{eq: nominal distributions}, $\mathfrak{C}^{1}$ achieves higher
accuracy and sensitivity than $\mathfrak{C}^{2}$. This is also the
case when using definition \eqref{eq:sensitivity L2}, as illustrated
in Fig. \ref{Fig:tradeoff curves}(b). While the nominal distributions
\eqref{eq: nominal
  distributions} are used to design the classifiers $\mathfrak{C}^{1}$
and $\mathfrak{C}^{2}$, we consider an adversary that manipulates the
observations so that their true distributions~are 
\begin{align}\label{eq: attacked distributions}
  \begin{split}
    f_0 (x, \theta_0) &= \mathcal{N}(x;\mu_0 + \bar \mu_0,\sigma_0 + \bar
    \sigma_0),  \text{ and } \\
    f_1 (x, \theta_1) &= \mathcal{N}(x;\mu_1 + \bar
    \mu_1 ,\sigma_1 + \bar \sigma_1) ,
  \end{split}
\end{align}
where $\bar \mu_0$, $\bar \mu_1$, $\bar \sigma_0$, and $\bar \sigma_1$
are unknown parameters selected by the adversary to deteriorate the
accuracy of the classifiers.

\begin{table}[t!]
  \centering
  \caption{Numerical Results}
  \vspace{-0.1in}
  \renewcommand{\arraystretch}{1.4} 
  \begin{tabular}[c]{l|m{0.5cm}m{0.5cm}m{0.75cm}m{0.75cm}m{0.75cm}m{0.75cm}}
    \Xhline{2\arrayrulewidth}
    Classifier                              & $y_1$ & $y_2$  & $\mc S(y;\theta)$ & $\mc A(y;\theta)$ & $\mc A_{s1}(x)$ & $\mc A_{s2}(x)$\\
    \hline
    $\mathfrak{C}^{1}$ & 3.65 & 18.78  & 0.0334 & 0.7891 & 0.6857 & 0.6808\\
    $\mathfrak{C}^{2}$ & 1.83 & 20.60  & 0.0201 & 0.7766 & 0.6947 & 0.6939\\
    \Xhline{2\arrayrulewidth}
  \end{tabular}
  \label{Table:numerical results}
\end{table}

To evaluate the accuracy of $\mathfrak{C}^{1}$ and $\mathfrak{C}^{2}$
to classify the manipulated observations, we generate $10000$
observations obeying the modified distributions \eqref{eq: attacked
  distributions}, and compute the accuracy of the classifiers as the
ratio of the number of correct predictions to the total number of
observations. We repeat this experiment $100$ times, and then compute
the average accuracy of the classifiers over all trials.

Table \ref{Table:numerical results} summarizes the results of the
classification problems with $\mathfrak{C}^{1}$ and $\mathfrak{C}^{2}$
on the altered observations. 
In particular, $y_1$ and $y_2$ are the
decision boundaries of the classifiers, while $\mc{S}(y;\theta)$ and
$\mc{A}(y;\theta)$ denote their nominal sensitivity and accuracy. Instead,
$\mc{A}_{s1}(x)$ and $\mc{A}_{s2}(x)$ denote the average accuracy of the
classifiers when, respectively, the adversarial parameters are
$\bar \mu_1 = \bar \mu_0 = \bar \sigma_0 = 0$, $\bar \sigma_1 = 3$,
and $\bar \mu_0 = 1$, $\bar \sigma_0 =2$, $\bar \mu_1 =-2$,
$\bar \sigma_1 = 1.5$. The results show that, although
$\mathfrak{C}^{1}$ exhibits higher accuracy that $\mathfrak{C}^{2}$
when the observations follow the nominal distributions \eqref{eq:
  nominal distributions}, $\mathfrak{C}^{2}$ outperforms
$\mathfrak{C}^{1}$ in both adversarial scenarios, as supported by our
analysis.

\section{Dependency of Accuracy and Sensitivity on the parameters of
  the distributions} \label{sec:acc_sens_dist_param} In this section
we analyze the effect of the parameters
$\theta = [\theta_0 \; \theta_1]^{\transpose}$ on the accuracy and
sensitivity of the classifiers. We consider the Maximum Likelihood
classifier $\mathfrak{C}_\text{ML}(x;\eta= 1)$ for the analysis since
it maximizes the accuracy.\footnote{A similar analysis can also be
  performed for general and linear classifiers. However, we omit this
  analysis due to space constraints.} Specifically, we wish to
determine the distribution parameters that minimize the sensitivity
while providing a given level of accuracy. We consider the
following problem: 
\begin{align}\label{eq:optimizaion
    over parameters}
  \begin{aligned}
    & \underset{\theta \in \Theta}{\text{min}}
    & & \mc S\big(y^*(\theta),\theta\big) \\
    & \text{s.t.}
    & & \mc A\big(y^*(\theta),\theta\big) = \gamma,
    & & 
  \end{aligned}
\end{align}
where $y^*(\theta)$ denotes the boundaries of the ML classifier
$\mathfrak{C}_\text{ML}(x;1)$, which depend on $\theta$ via
\eqref{eq:likelihood ratio equality}, $0.5 \leq \gamma \leq 1$ denotes
the accuracy level, and $\Theta$ denotes the set of admissible
parameters $\theta$ of the distributions. The optimization problem
\eqref{eq:optimizaion over parameters} captures the fundamental limit
of sensitivity that can be achieved by a ML classifier with a desired
level of accuracy. Note that, similarly to \eqref{eq:optimizaion}, the
optimization problem in \eqref{eq:optimizaion over parameters} is not
convex due to the nonlinear equality constraint.

\begin{figure}[t]
  \centering
  \includegraphics[width=\columnwidth,trim={0cm 2.8cm 0cm
    2cm},clip]{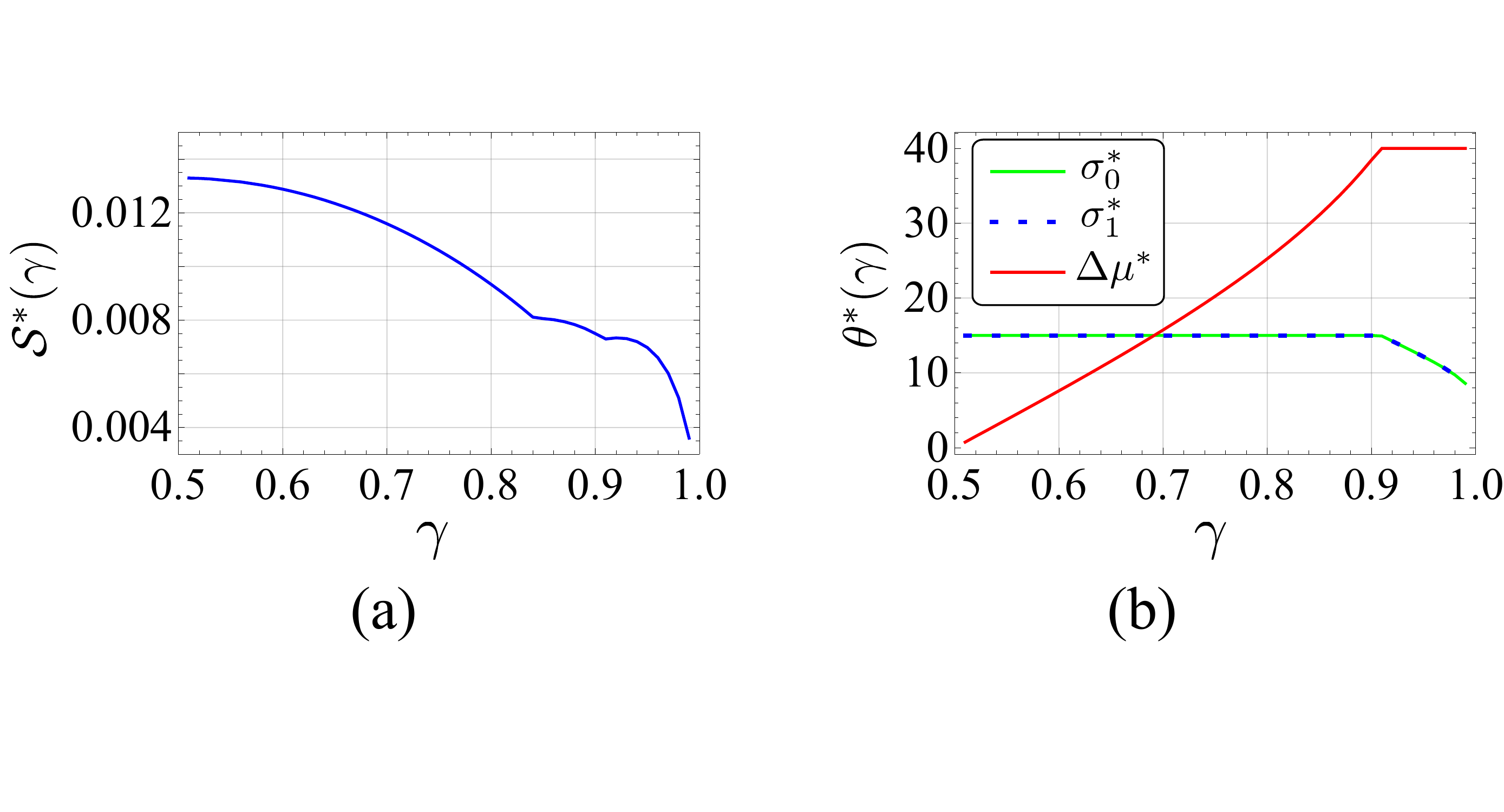}
  \vspace{-.5cm}
  \caption{(a) Minimum sensitivity (of \eqref{eq:optimizaion over
      parameters}) is a decreasing function of accuracy. (b)
    $\theta^{*}(\gamma) = [\mu^{*}_0, \sigma^{*}_0,\mu^{*}_1,
    \sigma^{*}_1]^{\transpose}$ denotes the optimal parameters of
    \eqref{eq:optimizaion over parameters}, with
    $\Delta \mu^{*} = |\mu^{*}_0-\mu^{*}_1|$. The constraints on
    $\theta$ are
    $|\mu_0-\mu_1| \leq 40, 0.1 \leq \sigma_1 \leq \sigma_0 \leq 15$.
    $\Delta \mu^{*}$ is non-decreasing and
    $\sigma^{*}_0, \sigma^{*}_1$ are non-increasing functions of
    $\gamma$, indicating that accuracy increases with the separation
    between the distributions $f_0$ and $f_1$.}
  \label{Fig:tradeoff curve wrt parameters}
\end{figure}

Let $\theta^{*}(\gamma)$ and $\mc{S}^{*}(\gamma)$ denote the optimal
parameters and minimum sensitivity of the optimization problem in
\eqref{eq:optimizaion over parameters}. Fig. \ref{Fig:tradeoff curve
  wrt parameters}(a) shows the variation of $\mc{S}^{*}(\gamma)$ as a
function of accuracy level $\gamma$ for the Gaussian hypothesis
testing problem detailed in Remark \ref{remark: Gaussian
  classifier}. It can be observed that $\mc{S}^{*}(\gamma)$ is a
decreasing function of $\gamma$. This is due to the fact that, to
achieve a higher level of accuracy, the ``separation'' between the two
distributions should be larger, as evident in Fig.~\ref{Fig:tradeoff
  curve wrt parameters}(b). At a larger separation, the effect of
changes in the distribution parameters on the accuracy of the
classifier is smaller, thereby resulting in a smaller sensitivity.

\begin{lemma} {\bf \emph{(Accuracy and sensitivity for Gaussian
      testing)}} Consider an hypothesis testing problem with
  $f_0 = \mc{N}(x;\mu_0,\sigma)$ and $f_1 = \mc{N}(x;\mu_1,\sigma)$,
  with $\theta = [\mu_0 \; \mu_1]^{\transpose}$ and $p_0 = p_1 = 0.5$.
  Assume that $\sigma$ is fixed. Then, for classifier
  $\mathfrak{C}_\text{ML}(x;1)$, $\mc{S}^{*}(\gamma)$ is a decreasing
  function of accuracy $\gamma$. \end{lemma}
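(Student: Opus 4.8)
The plan is to use the equal-variance structure to reduce every quantity in \eqref{eq:optimizaion over parameters} to a function of the single scalar separation $\Delta = |\mu_1 - \mu_0|$, after which the parameter minimization collapses to a closed form. First I would specialize \eqref{eq:likelihood ratio equality gaussian} to $\sigma_0 = \sigma_1 = \sigma$: the leading coefficient $a = \tfrac{1}{2}(\sigma^{-2} - \sigma^{-2})$ vanishes, so the quadratic degenerates to the linear equation $bx + c = 0$; with $\eta = 1$ and $p_0 = p_1$ its unique root is the midpoint $y^* = (\mu_0 + \mu_1)/2$. Thus $\mathfrak{C}_\text{ML}(x;1)$ has a single boundary, and (taking $\mu_1 > \mu_0$ without loss of generality, by the footnote on the two classifier configurations) it coincides with a linear classifier.

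Next I would evaluate the accuracy at $y^*$ through the single-boundary formula \eqref{eq:accuracy_lin_class}. Writing the two Gaussian integrals with the $Q$ function and simplifying via $Q(-z) = 1 - Q(z)$ gives the closed form $\gamma = \mc A(y^*;\theta) = Q\!\big(\Delta/(2\sigma)\big)$. Since $Q$ is strictly increasing and $\sigma$ is fixed, the constraint $\mc A(y^*(\theta),\theta) = \gamma$ is equivalent to fixing $\Delta = 2\sigma\, Q^{-1}(\gamma)$.

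Then I would compute the sensitivity \eqref{eq: sensitivity}. Differentiating \eqref{eq:accuracy_lin_class} at fixed $y$ yields the gradient components $\partial \mc A/\partial\mu_0 = -p_0 f_0(y;\mu_0,\sigma)$ and $\partial \mc A/\partial\mu_1 = p_1 f_1(y;\mu_1,\sigma)$. Evaluated at the midpoint the two densities are equal, $f_0(y^*;\mu_0,\sigma) = f_1(y^*;\mu_1,\sigma) = \tfrac{1}{\sqrt{2\pi}\,\sigma}\, e^{-\Delta^2/(8\sigma^2)}$, so both components have the same magnitude and the $\infty$-norm gives $\mc S(y^*;\theta) = \tfrac{1}{2\sqrt{2\pi}\,\sigma}\, e^{-\Delta^2/(8\sigma^2)}$, which once more depends on $\theta$ only through $\Delta$.

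Finally I would assemble the pieces. Because both the objective and the constraint in \eqref{eq:optimizaion over parameters} depend on $\theta$ solely through $\Delta$, the objective is constant on each level set of the constraint, so the minimization is trivial and $\mc S^*(\gamma)$ equals that common value. Substituting $\Delta = 2\sigma\, Q^{-1}(\gamma)$, hence $\Delta^2/(8\sigma^2) = (Q^{-1}(\gamma))^2/2$, produces $\mc S^*(\gamma) = \tfrac{1}{2\sqrt{2\pi}\,\sigma}\, e^{-(Q^{-1}(\gamma))^2/2}$; since $Q^{-1}(\gamma) \ge 0$ is increasing on $[1/2,1]$, the exponent grows and $\mc S^*(\gamma)$ is strictly decreasing. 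There is no genuine obstacle in this computation; the two points that need care are handling the degeneracy $a = 0$ correctly, so that the midpoint is recognized as the unique boundary rather than passing to the two-root formula, and noticing that both accuracy and sensitivity collapse to functions of the single scalar $\Delta$, which is exactly what renders the parameter minimization explicit.
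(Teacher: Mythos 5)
Your proposal is correct and follows essentially the same route as the paper's proof: reduce to the single midpoint boundary $y^*=(\mu_0+\mu_1)/2$, express accuracy as $Q\big(|\mu_1-\mu_0|/(2\sigma)\big)$ and sensitivity as a Gaussian function of the separation, and conclude by monotonicity in $|\mu_1-\mu_0|$. Your version is in fact slightly more careful, since your exponent $-\Delta^2/(8\sigma^2)$ is the correct one (the paper's displayed $e^{-(\mu_1-\mu_0)^2/2}$ drops the $8\sigma^2$), and you make explicit the observation that the minimization over $\Theta$ collapses because both objective and constraint depend on $\theta$ only through $\Delta$.
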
 \begin{proof} For the Gaussian
  testing problem with $\sigma_0=\sigma_1 = \sigma$, $p_0=p_1=0.5$,
  Equation \eqref{eq:likelihood ratio equality} has a single solution
  for $\eta=1$ given by $y^*(\theta) = \frac{\mu_0 + \mu_1}{2}$. Using
  \eqref{eq:accuracy_lin_class}, the accuracy is given by
  $\mc{A}(y^*(\theta);\theta) =
  Q\left(\frac{|\mu_1-\mu_0|}{2\sigma}\right)$. Since $\sigma$ is
  fixed, we take the derivative of $\mc{A}(y^*(\theta);\theta)$ with
  respect to the means:
  \begin{align*}
    \mc{S}(y^*(\theta);\theta) \!&=\! 
                                   \frac{1}{2\sigma\sqrt{2\pi}}
                                   e^{-\frac{(\mu_1-\mu_0)^2}{2}}.
  \end{align*} 
  To conclude, $\mc{A}(y^*(\theta);\theta)$ and
  $\mc{S}(y^*(\theta);\theta)$ are increasing and decreasing functions
  of $|\mu_1-\mu_0|$, respectively. 
\end{proof}

\begin{lemma} {\bf \emph{(Accuracy and sensitivity for Exponential
      testing)}}
  Consider an hypothesis testing problem with
  $f_0(x;\lambda_0) = \lambda_0 e^{-\lambda_0 x}$ and
  $f_1(x;\lambda_1) = \lambda_1 e^{-\lambda_1 x}$, with $x\geq 0$,
  $\theta = \lambda_1$, and $p_0 = p_1 = 0.5$. Then, for
  classifier $\mathfrak{C}_\text{ML}(x;1)$ and a fixed $\lambda_0$,
  $\mc{S}^{*}(\gamma)$ is a decreasing function of accuracy~$\gamma$.
\end{lemma}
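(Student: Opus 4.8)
The plan is to mirror the proof of the preceding Gaussian lemma: compute the single ML boundary in closed form, express both accuracy and sensitivity through one scalar separation parameter, and then read off the required monotonicities. First I would solve the likelihood-ratio equation \eqref{eq:likelihood ratio equality} with $\eta=1$ and $p_0=p_1$, i.e. $\lambda_1 e^{-\lambda_1 x}=\lambda_0 e^{-\lambda_0 x}$, which has the unique root $y^{*}=\tfrac{\ln(\lambda_1/\lambda_0)}{\lambda_1-\lambda_0}>0$. Setting $r=\lambda_1/\lambda_0$, the sign of $\lambda_1-\lambda_0$ fixes the orientation of the decision regions, and in either case integrating the exponential densities as in \eqref{eq:accuracy} gives a closed form for $\mc A(y^{*};\theta)$ in terms of the reduced quantities $a=\lambda_0 y^{*}=\tfrac{\ln r}{r-1}$ and $b=\lambda_1 y^{*}=\tfrac{r\ln r}{r-1}$, which obey the identities $b=ra$ and $b-a=\ln r$.

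Next I would compute the sensitivity. Since $\theta=\lambda_1$ is scalar, the $\infty$-norm in \eqref{eq: sensitivity} collapses to an absolute value, and differentiating the accuracy with the boundary held fixed yields $\mc S(y;\theta)=\tfrac12\,y\,e^{-\lambda_1 y}$ in both orientations; evaluating at $y=y^{*}$ gives $\mc S(y^{*};\theta)=\tfrac{1}{2\lambda_0}\,a\,e^{-b}$. Thus $\mc A$ and $\mc S$ are both functions of the single parameter $r$, and the problem reduces to comparing how they move as $r$ varies.

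The one genuinely new difficulty, absent in the symmetric Gaussian case, is that only $\lambda_1$ is varied while $\lambda_0$ is fixed, so the map $r\mapsto\mc A$ is not injective. I would first show that accuracy is invariant under $r\mapsto 1/r$ (using the reflection identities $a(1/r)=b(r)$ and $b(1/r)=a(r)$), so each level $\gamma\in(0.5,1)$ is attained on both branches $\lambda_1>\lambda_0$ and $\lambda_1<\lambda_0$. Comparing the two sensitivities with the same identities produces the clean ratio $\mc S_{-}/\mc S_{+}=(b/a)\,e^{\,b-a}=r\cdot r=r^{2}>1$, so the minimization defining $\mc S^{*}(\gamma)$ in \eqref{eq:optimizaion over parameters} always selects the branch $\lambda_1>\lambda_0$, and $\mc S^{*}(\gamma)$ equals the sensitivity on that branch.

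It then remains to prove the monotonicities on the branch $r>1$. The main calculational step is to show that $a(r)=\tfrac{\ln r}{r-1}$ is strictly decreasing and $b(r)=\tfrac{r\ln r}{r-1}$ strictly increasing; both follow by signing the numerators of $a'$ and $b'$, namely $1-\tfrac1r-\ln r<0$ and $(r-1)-\ln r>0$ for $r>1$, each obtained from its vanishing at $r=1$ together with a monotone derivative. Given this, $e^{-a}$ increases and $e^{-b}$ decreases, so $\mc A=\tfrac12(1+e^{-a}-e^{-b})$ is increasing while $\mc S=\tfrac{1}{2\lambda_0}\,a\,e^{-b}$ is a product of two positive decreasing factors and hence decreasing. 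Since accuracy rises and sensitivity falls along the selected branch, $\mc S^{*}$ is a decreasing function of $\gamma$, as claimed. I expect the branch comparison and the two elementary sign lemmas to be the crux; the rest is bookkeeping parallel to the Gaussian argument.
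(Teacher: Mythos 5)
Your proof is correct and follows essentially the same route as the paper's: the same single ML boundary, the same reduction to the ratio $r=\lambda_1/\lambda_0$ (your $\tfrac12(1+e^{-a}-e^{-b})$ and $\tfrac{1}{2\lambda_0}\,a\,e^{-b}$ are exactly the paper's $0.5+0.5(r-1)r^{-r/(r-1)}$ and $\tfrac{\log r}{2\lambda_0(r-1)}r^{-r/(r-1)}$), and the same conclusion that accuracy increases and sensitivity decreases in $r$. The only difference is that you make explicit two steps the paper elides, namely the sign analysis behind ``inspecting the derivatives'' and the branch comparison $\mathcal{S}_{-}/\mathcal{S}_{+}=r^{2}$ that justifies the paper's ``without loss of generality $\lambda_0<\lambda_1$'' when the constraint $\mathcal{A}=\gamma$ admits solutions on both sides of $\lambda_0$.
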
 \begin{proof}
  Without loss of generality, we assume $0<\lambda_0<\lambda_1$. For
  $p_0=p_1=0.5$, Equation \eqref{eq:likelihood ratio equality} has a
  single solution for $\eta=1$ given by
  $y^*(\theta) =\frac{1}{\lambda_1-\lambda_0}
  \log(\frac{\lambda_1}{\lambda_0})$. Using
  \eqref{eq:accuracy_lin_class}, 
 \begin{align*}
 \mc{A}(y^*(\theta);\theta) = 0.5 + 0.5 (r-1) r^{-\frac{r}{r-1}},
\end{align*}
where $r = \frac{\lambda_1}{\lambda_0}$. The sensitivity is given by
\begin{align*}
\mc{S}(y^*(\theta);\theta) &= \left \lvert \frac{\partial}{\partial r} \mc{A}(y^*(\theta);\theta) \right\rvert   \left \lvert \frac{\partial r}{\partial \lambda_1}  \right \lvert 
                             = \frac{\log(r)}{2\lambda_0 (r-1)} r^{-\frac{r}{r-1}}.
\end{align*}
To conclude, by inspecting the derivatives of $\mc{A}(y^*(\theta);\theta)$ and $\mc{S}(y^*(\theta);\theta)$ with respect to $r$, it can be seen that they are increasing and decreasing functions of $r$, respectively.
\end{proof}

\section{Conclusion and future work} \label{sec: conclusion} In this
paper we show that a fundamental tradeoff exists between the accuracy
of a binary classification algorithm and its sensitivity to
adversarial manipulation of the data. Thus, accuracy can only be
maximized at the expenses of the sensitivity to data manipulation, and
this tradeoff cannot be arbitrarily improved by tuning the algorithm's
parameters. Directions of future interest include the extension to
M-ary testing problems, as well as the formal characterization of the
relationships between the complexity of the classification algorithm
and its accuracy versus sensitivity tradeoff.






\renewcommand{\baselinestretch}{0.96}

\bibliographystyle{unsrt}
\bibliography{New}

\begin{thebibliography}{10}

\bibitem{YL-KK-CF:10}
Y.~LeCun, K.~Kavukcuoglu, and C.~Farabet.
\newblock Convolutional networks and applications in vision.
\newblock In {\em International Symposium on Circuits and Systems}, pages
  253--256, Paris, France, May 2010.

\bibitem{AK-IS-GEH:12}
A.~Krizhevsky, I.~Sutskever, and G.~E. Hinton.
\newblock Imagenet classification with deep convolutional neural networks.
\newblock In {\em Advances in neural information processing systems}, pages
  1097--1105, Lake Tahoe, NV, USA, Dec 2012.

\bibitem{GED-DY-LD-AA:12}
G.~E. Dahl, D.~Yu, L.~Deng, and A.~Acero.
\newblock Context-dependent pre-trained deep neural networks for
  large-vocabulary speech recognition.
\newblock {\em IEEE Transactions on audio, speech, and language processing},
  20(1):30--42, 2012.

\bibitem{GH-LD-DY-GED-AM-NJ-AS-VV-PN-TNS-BK:12}
G.~Hinton, L.~Deng, D.~Yu, G.~E. Dahl, A.~Mohamed, N.~Jaitly, A.~Senior,
  V.~Vanhoucke, P.~Nguyen, T.~N. Sainath, and B.~Kingsbury.
\newblock Deep neural networks for acoustic modeling in speech recognition: The
  shared views of four research groups.
\newblock {\em IEEE Signal processing magazine}, 29(6):82--97, 2012.

\bibitem{GED-JWS-LD-DY:13}
G.~E. Dahl, J.~W. Stokes, L.~Deng, and D.~Yu.
\newblock Large-scale malware classification using random projections and
  neural networks.
\newblock In {\em International Conference on Acoustics, Speech and Signal
  Processing}, pages 3422--3426, Vancouver, BC, Canada, May 2013.

\bibitem{CS-WZ-IS-JB-DE-IG-RF:14}
C.~Szegedy, W.~Zaremba, I.~Sutskever, J.~Bruna, D.~Erhan, I.~Goodfellow, and
  R.~Fergus.
\newblock Intriguing properties of neural networks.
\newblock In {\em International Conference on Learning Representations}, Banff,
  Canada, Apr 2014.

\bibitem{IJG-JS-CS:14}
I.~J. Goodfellow, J.~Shlens, and C.~Szegedy.
\newblock Explaining and harnessing adversarial examples.
\newblock In {\em International Conference on Learning Representations}, San
  Diego, CA, USA, May 2015.

\bibitem{DL-CM:05}
D.~Lowd and C.~Meek.
\newblock Adversarial learning.
\newblock In {\em International Conference on Knowledge Discovery in Data
  Mining}, pages 641--647, Chicago, IL, USA, Aug 2005.

\bibitem{NP-PM-SJ-MF-ZBC-AS:16}
N.~Papernot, P.~McDaniel, S.~Jha, M.~Fredrikson, Z.~B. Celik, and A.~Swami.
\newblock The limitations of deep learning in adversarial settings.
\newblock In {\em European Symposium on Security and Privacy}, pages 372--387,
  Saarbrucken, Germany, Mar 2016.

\bibitem{SMMD-AAF-PF:16}
S.~M. Moosavi-Dezfooli, A., Fawzi P., and Frossard.
\newblock Deepfool: a simple and accurate method to fool deep neural networks.
\newblock In {\em {C}omputer {V}ision and {P}attern {R}ecognition}, pages
  2574--2582, Las Vegas, NV, USA, Jun 2016.

\bibitem{AK-IG-SB:16}
A.~Kurakin, I.~Goodfellow, and S.~Bengio.
\newblock Adversarial machine learning at scale.
\newblock In {\em International Conference on Learning Representations},
  Toulon, France, Apr 2017.

\bibitem{AK-IG-SB-YD:18}
A.~Kurakin, I.~Goodfellow, S.~Bengio, Y.~Dong, F.~Liao, M.~Liang, T.~Pang,
  J.~Zhu, X.~Hu, C.~Xie, et~al.
\newblock Adversarial attacks and defences competition.
\newblock In {\em The NIPS'17 Competition: Building Intelligent Systems}, pages
  195--231. Springer, 2018.

\bibitem{AR-JS-PL:18}
A.~Raghunathan, J.~Steinhardt, and P.~Liang.
\newblock Certified defenses against adversarial examples.
\newblock In {\em International Conference on Learning Representations},
  Vancouver, BC, Canada, May 2018.

\bibitem{EW-JZK:17}
E.~Wong and Z.~Kolter.
\newblock Provable defenses against adversarial examples via the convex outer
  adversarial polytope.
\newblock In {\em International Conference on Machine Learning}, pages
  5286--5295, Stockholmsmässan, Stockholm Sweden, Jul 2018.

\bibitem{TAS-AAG:06}
T.~A. Schonhoff and A.~A. Giordano.
\newblock {\em Detection and estimation theory and its applications}.
\newblock Pearson College Division, 2006.

\bibitem{GXY-CHH-CJL:12}
G.~X. Yuan, C.~H. Ho, and C.~J. Lin.
\newblock Recent advances of large-scale linear classification.
\newblock {\em Proceedings of the IEEE}, 100(9):2584--2603, 2012.

\bibitem{AG-YV-XK:18}
A.~Ghafouri, Y.~Vorobeychik, and X.~Koutsoukos.
\newblock Adversarial regression for detecting attacks in cyber-physical
  systems.
\newblock In {\em International Joint Conference on Artificial Intelligence},
  pages 3769--3775, Stockholm, Sweden, Jul 2018.

\end{thebibliography}

\end{document}